\newtheorem{theorem}{Theorem}
\newtheorem{lemma}[theorem]{Lemma}
\newtheorem{definition}{Definition}
\title{MeshA*: Efficient Path Planning With Motion Primitives}
\author {
    Marat Agranovskiy,
    Konstantin Yakovlev
}
\begin{document}

\maketitle

\begin{abstract}
We study a path planning problem where the possible move actions are represented as a finite set of motion primitives aligned with the grid representation of the environment. That is, each primitive corresponds to a short kinodynamically-feasible motion of an agent and is represented as a sequence of the swept cells of a grid. Typically, heuristic search, i.e. A*, is conducted over the lattice induced by these primitives (lattice-based planning) to find a path. However, due to the large branching factor, such search may be inefficient in practice. To this end, we suggest a novel technique rooted in the idea of searching over the grid cells (as in vanilla A*) simultaneously fitting the possible sequences of the motion primitives into these cells. The resultant algorithm, MeshA*, provably preserves the guarantees on completeness and optimality, on the one hand, and is shown to notably outperform conventional lattice-based planning (x1.5-x2 decrease in the runtime), on the other hand.
\end{abstract}

\begin{links}
    \link{Code}{https://github.com/PathPlanning/MeshAStar}
\end{links}

\section{Introduction}

Kinodynamic path planning is a fundamental problem in AI, automated planning, and robotics. Among the various approaches to tackle this problem, the following two are the most widespread and common: sampling-based planning~\cite{sampling, prims_in_sampling} and lattice-based planning~\cite{pivtoraiko2005, pivtoraiko2009}. The former methods operate in continuous space, rely on the randomized decomposition of the problem into smaller sub-problems, and are especially advantageous in high-dimensional planning (e.g., planning for robotic manipulators). Still, they provide only probabilistic guaranties of completeness and optimality. Lattice-based planners rely on the discretization of the workspace/configuration space and provide strong theoretical guaranties with respect to this discretization. Consequently, they may be preferable when the number of degrees of freedom of the agent is not high, such as in mobile robotics, where one primarily considers the coordinates and the heading of the robot. In this work, we focus on the lattice-based methods for path planning in $(x,y,\theta)$.   

Lattice-based planning methods reason over the so-called \emph{motion primitives} -- the precomputed kinodynamically-feasible motions from which the sought path is constructed -- see Fig.~\ref{example_traj}. Stacked motion primitives form a \textit{state lattice}, i.e., a graph, where the vertices correspond to the states of the agent and the edges correspond to the motion primitives. A shortest path on this graph may be obtained by algorithms, such as A*~\cite{astar}, that guarantee completeness and optimality. Unfortunately, when the number of motion primitives is high (which is not uncommon in practice), searching over the lattice graph becomes computationally burdensome. To this end, in this work, we introduce a novel perspective on lattice-based planning.

\begin{figure}[t]
\centering
\includegraphics[width=0.47\textwidth]{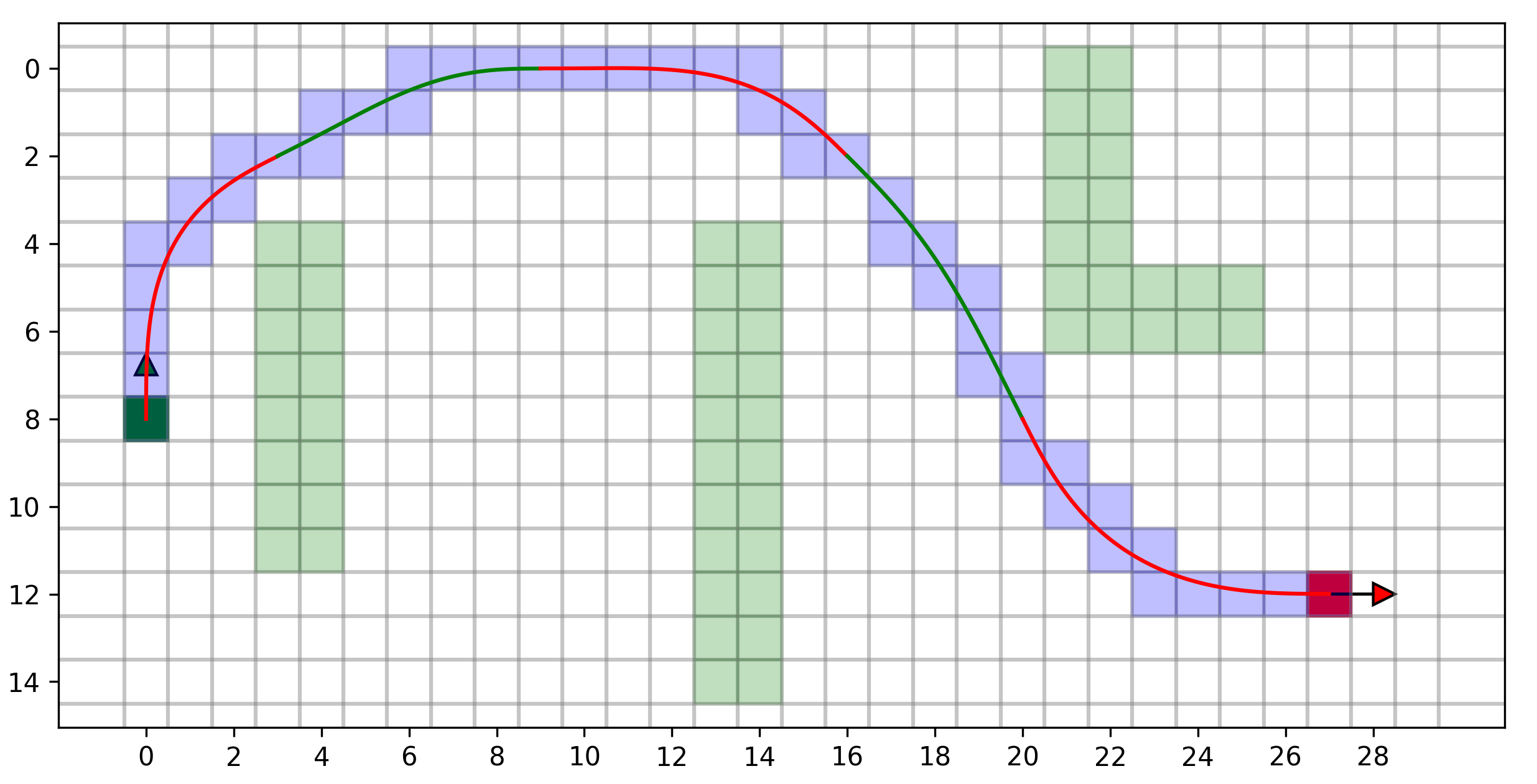}
\caption{Example of the path planning problem. The workspace is discretized to a grid, where the green cells correspond to the obstacles and the white ones represent the free space. The green cell with an arrow denotes the start state (position and heading), while the red cell -- the goal one. The path is composed of the primitives (green and red segments). The swept cells are shown in blue.}
\label{example_traj}
\end{figure}

We leverage the assumption that the workspace is represented as an occupancy grid (a standard practice for path planning) and search over this grid in a cell-by-cell fashion to form a sequence of cells such that a sought path, i.e. a sequence of the motion primitives, may be fitted into this sequence of cells. We introduce a dedicated technique to reason simultaneously about the grid cells and the motion primitives that pass through these cells within the search process. Such reasoning allows us to decrease the branching factor, on the one hand, and to maintain the theoretical guarantees, on the other hand. Empirically, we show that the introduced path planning method, called MeshA*, is notably faster than the conventional A* and its lazy variant (this holds when the weighted heuristic is utilized as well).

\section{Related Work}

Various approaches to pathfinding consider a mobile agent’s kinodynamic constraints~\cite{review}. One prominent group is sampling-based planning. Classical representatives include the RRT algorithm~\cite{rrt}, which rapidly explores the configuration space, its anytime modification RRT*~\cite{rrt_star} aimed at converging to optimal solutions, and variants like Informed RRT*~\cite{informed_rrt}, which incorporates heuristics, RRT-Connect~\cite{rrt_connect} for fast bidirectional planning, and $\mathrm {RRT^{X}}$~\cite{rrt_x} for fast re-planning.

Another direction is lattice-based planning, where a set of motion primitives that respect the constraints of the agent is constructed, and then a suitable sequence of these primitives is sought. Primitives can be generated using B-splines~\cite{b_spline_prims}, the shooting method~\cite{shoot_method_prims}, the covering method~\cite{cover_prims}, learning-based techniques~\cite{learn_prims}, and others. In this work, we follow the lattice-based approach, but assume the primitives are given in advance (for experiments, we use a simple Newton optimization method~\cite{poly_optimization_prims}).

Our focus is on reducing path planning search effort. Similar goals are addressed by approaches like Jump Point Search~\cite{jps}, which significantly accelerates search on an 8-connected grid, and the well-known WA*~\cite{wastar}, which provides suboptimal solutions more quickly using weighted heuristics.

\section{Problem Statement}
Consider a point-sized mobile agent moving in a 2D workspace $W \subset \mathds{R}^2$ composed of a free space $W_{free}$ and obstacles $W_{obs}$. The workspace is tessellated into a grid, where each cell $(i,j)$ is either free or blocked.

\textbf{State Representation}. The state of the agent is defined by a 3D vector $(x, y, \phi)$, with coordinates $(x, y) \in W$ and heading angle $\phi \in [0, 360^\circ)$. We assume that the latter can be discretized into a set $\Theta = \{\phi_1, ..., \phi_k\}$, allowing us to focus on discrete states $s = (i, j, \theta)$ that correspond to the centers of the grid cells $(i,j) \in \mathds{Z}^2$, with $\theta \in \Theta$.

\textbf{Motion Primitives}. The kinematic constraints and physical capabilities of the mobile agent are encapsulated in \textit{motion primitives}, each representing a short kinodynamically-feasible motion (i.e., a continuous state change). We assume these primitives align with the discretization, meaning that transitions occur between two discrete states, such as $(i, j, \theta)$ and $(i', j', \theta')$.
In other words, each motion starts and ends at the center of a grid cell, with its endpoint headings belonging to the finite set $\Theta$. Each primitive is additionally associated with the \emph{collision trace}, which is a sequence of cells swept by the agent when executing the motion, and \emph{cost} which is a positive number (e.g. the length of the primitive).

For a given state $s=(i, j, \theta)$ there is a finite number of motion primitives that the agent can use to move to other states. Moreover, we assume that there can be no more than one primitive leading to each other state, which corresponds to the intuitive understanding of a primitive as an elementary motion. We also consider that the space of discrete states, along with the primitives, is \textit{regular}; that is, for any $\delta_i, \delta_j \in \mathds{Z}$ if there exists a motion primitive connecting $(i, j, \theta)$ and $(i',j',\theta')$ then there also exists one from $(i+\delta_i,j+\delta_j,\theta)$ to $(i'+\delta_i,j'+\delta_j,\theta')$. While the endpoints of such primitives are distinct, the motion itself is not. This means that the collision traces of these primitives differ only by a parallel shift of the cells, and their costs coincide.
Thus, we can consider a canonical set of primitives, \textit{control set}, from which all others can be obtained through parallel translation. We assume that such a set is finite and is computed in advance according to the specific motion model of the mobile agent. To avoid ambiguity, we clarify the usage of the term ``primitive''. Unless specified otherwise, it refers to a specific motion between two discrete states. When we intend to refer to a motion template, we will explicitly state that the primitive is \textit{from the control set}. Such a template, instantiated at a discrete state, yields an exact primitive.

\textbf{Path}. A \textit{path} is a sequence of motion primitives, where the adjacent ones share the same discrete state. Its \textit{collision trace} is the union of the collision traces of the constituent primitives (shown as blue cells in Fig.~\ref{example_traj}). A path is \textit{collision-free} if its collision trace consists of free cells only.
    
\textbf{Problem}. The task is to find a collision-free path from a given start state $s_0$ to a goal state $s_f$. We wish to solve this problem optimally, i.e. to obtain the least cost path, where the cost of the path is the sum of the costs of its primitives.

\section{Method}

A well-established approach to solve the given problem is to search for a path on a \textit{state lattice} graph, where the vertices represent the discrete states and edges -- the primitives connecting them. In particular, heuristic search algorithms of the A* family can be used for such pathfinding.

These algorithms iteratively construct a search tree composed of the partial paths (sequences of the motion primitives). At each iteration, the most prominent partial path is chosen for extension. Extension is done by expanding the path's endpoint -- a discrete state $(i, j, \theta)$. This expansion involves considering all the primitives that can be applied to the state, checking which ones are valid (i.e. do not collide with the obstacles), computing the transition costs, filtering out the duplicates (i.e. the motions that lead to the states for which there already exist paths in the search tree at a lower or equal cost), and adding the new states to the tree.
Indeed, as the number of available motion primitives increases, the expansion procedure (which is the main building block of a search algorithm) becomes computationally burdensome, and the performance of the algorithm degrades.

Partially, this problem can be addressed by the \emph{lazy} approach, where certain computations associated with the expansion are postponed, most often -- collision checking. However, in environments with complex obstacle arrangements, searches that exploit lazy collision checking often extract invalid states (for which the collision check fails). Consequently, a significant amount of time is wasted on extra operations with the search tree, which is also time-consuming.

We propose an alternative approach. Instead of searching at the level of primitives, we search at the level of individual cells. During the search, we simultaneously reason about the motion primitives that can pass through the cells. This is achieved by defining a new search element that combines a cell with a set of primitives and a proper successor relationship. This approach allows for obtaining optimal solutions faster by leveraging its cell-by-cell nature.

\begin{figure}[t]
\centering
\includegraphics[width=0.46\textwidth]{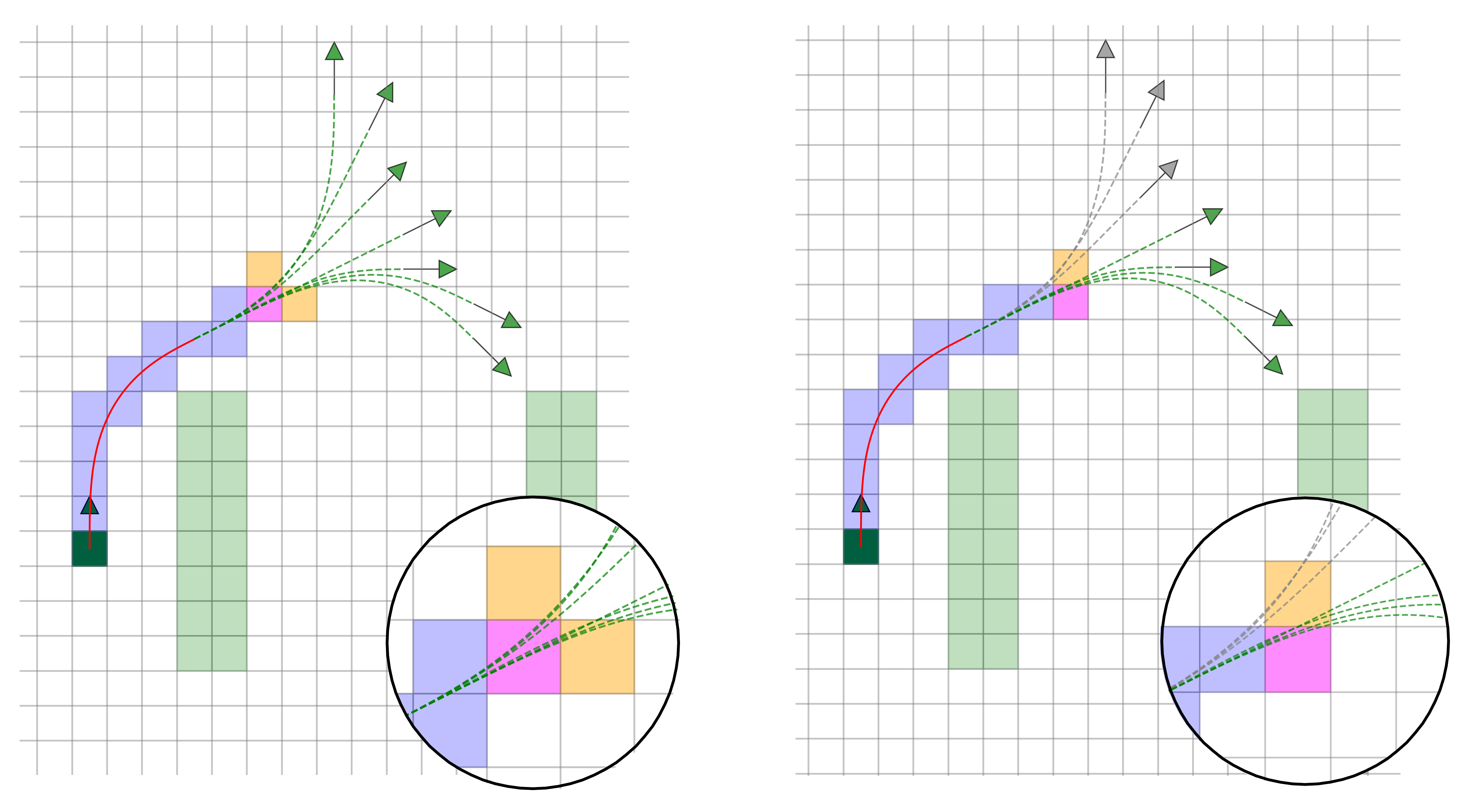}
\caption{Definition of cell successors during the construction of a collision trace.}
\label{example_mesh_graph}
\end{figure}

Fig.~\ref{example_mesh_graph} illustrates the intuition behind our search. On the left, a partial collision trace ends at a magenta cell under expansion. Assume this cell is reached by a red primitive, followed by any of the green ones, as their collision traces do not differ before this point. We store information about all primitives passing through a cell, forming what we call an \emph{extended cell}. This augmentation allows us to infer a small set of successors: the orange cells along the paths of the green primitives. For these successors, we generate corresponding extended cells, propagating the information about the primitives that pass through them. Later, when the search expands one such successor (e.g., the cell to the right, see the right panel of Fig.~\ref{example_mesh_graph}), its propagated information (gray primitives are excluded, as they led to a different cell) determines the cell above as the only valid continuation.


\subsection{The Search Space}
To define the elements of our search space, first, consider a set of $n$ arbitrary motion primitives from the control set, $prim_1$, ..., $prim_n$, and an index $k\in\mathbb{N}$ such that $\forall i\in \{1,\ldots,n\}:~k<U_{prim_i}$, where $U_{prim_i}$ is the number of cells in the collision trace of $prim_i$. Now, the following set of pairs is called the \textbf{configuration of primitives}:
\begin{equation*}
    \Psi=\left\{(prim_1,k), (prim_2,k), ..., (prim_n,k)\right\},
\end{equation*}
and used to define elements of the search space.

\begin{definition}[Extended cell]
An element of our search space is an \textbf{extended cell}, formally defined as a tuple of a specific grid cell $(i,j)$ and a configuration of primitives $\Psi$:
\begin{equation*}
    u = (i, j, \Psi)
\end{equation*}
\end{definition}

Recall that primitives from the control set act as motion templates, which, when instantiated at specific discrete states, yield specific primitives. Thus, the conceptual meaning of an extended cell is as follows: for each pair $(prim, k) \in \Psi$, we consider a copy of $prim$ traversing cell $(i, j)$ such that this cell is the $k$-th in its collision trace. Thus, an extended cell captures information both about the grid cell itself and about the motion primitives passing through it. The magenta cells in Fig.~\ref{example_mesh_graph} with green primitives passing through them illustrate this concept. The \textit{projection} of the extended cell $u = (i,j,\Psi)$ is the grid cell $(i, j)$.

In practice, instead of storing the configuration of primitives directly, a single number, assigned by indexing all possible configurations, can be used (see Appendix for details).

\begin{definition}[Initial Configuration]
For a given heading $\theta$, the initial configuration $\Psi_{\theta}$ is defined as:
\begin{equation*}
    \Psi_{\theta} = \left\{\left(prim_1, 1\right), \ldots, \left(prim_r, 1\right)\right\}, 
\end{equation*}
where $prim_1, \ldots, prim_r$ are all primitives from the control set with initial heading $\theta$ (see details in Algorithm~\ref{algorithm_initial_conf}).
\end{definition}
An extended cell containing such a configuration is an \textit{initial extended cell}. The latter can be viewed as a direct analog to a discrete state.

\begin{algorithm}[t]
\caption{Building the Initial Configuration}
\label{algorithm_initial_conf}
\textbf{Input:} Discrete angle $\theta \in \Theta$\\
\textbf{Function name:} \textsc{InitConf}($\theta$)
\begin{algorithmic}[1]
\State $\Psi \gets \emptyset$
\ForAll{$\mathit{prim} \in \texttt{ControlSet}$}
    \If{$\mathit{prim}$ emerges in $\theta$}
        \State $\Psi.\texttt{add}\left\{(\mathit{prim}, 1)\right\}$
    \EndIf
\EndFor
\State \Return $\Psi$
\end{algorithmic}
\end{algorithm}

\subsection{Successors}

To define the successor relationship between the extended cells, let us first denote the \textit{displacement} by
\begin{equation*}
\Delta^{prim}_k = (i' - i, j' - j)
\end{equation*}
\noindent This is the change in grid coordinates when transitioning from the $k$-th cell to the $(k+1)$-th cell in the collision trace of $prim$. Throughout the paper, we call the process of making such a transition a \textit{step along the primitive}. Thus, each step along the primitive is characterized by its displacement.

 Intuitively, the successors of an extended cell are obtained simply as the results of steps along each primitive from the current configuration (see Fig.~\ref{example_mesh_graph}). The formal definition is divided into two parts, depending on the type of successor.

\begin{definition}[Initial Successor]
\label{def:initial_successor}
Let $u=(i,j,\Psi)$ be an extended cell. Initial extended cell $v=(i',j',\Psi_{\theta})$ is called an \textbf{initial successor} of $u$ if the following condition holds:
\begin{align*}
    &\exists (prim, k) \in \Psi~~\text{such that:}~~k = U_{prim} - 1,\\
    &\Delta^{prim}_k = (i' - i, j' - j) \text{ and } prim \text{ ends at angle } \theta
\end{align*}
\end{definition}
\noindent
This condition ensures the existence of $prim$ in $\Psi$ that completes at $(i', j')$ with heading $\theta$. At this point, it can be extended by a whole bundle of primitives from $\Psi_{\theta}$, all starting with the same angle $\theta$.

\begin{definition}[Regular Successor]
\label{def:other_successor}
Let $v=(i',j',\Psi')$ and $u=(i,j,\Psi)$ be extended cells, with $v$ not being initial. Then $v$ is a \textbf{successor} of $u$ if the following hold simultaneously:
\begin{enumerate}
    \item \textbf{Predecessor Completeness:} For all primitives in $\Psi'$, their preceding cells must exist in the predecessor.
        \begin{align*}
                &\forall (prim, k) \in \Psi'~~\text{the following holds:}\\
                &(prim, k-1) \in \Psi~~\text{and}~~(i'-i, j'-j) = \Delta_{k-1}^{prim}
        \end{align*}

    \item \textbf{Successor Coverage:} All valid continuations from $\Psi$ to the successor's projection must be included in $\Psi'$.
    \begin{align*}
        &\forall (prim, k) \in \Psi~~\text{such that}~~k < U_{prim}-1~~\text{and}\\
        &\Delta_{k}^{prim}=(i'-i,j'-j)~~\text{the following holds:}\\
        &(prim, k+1) \in \Psi'
    \end{align*}
\end{enumerate}
\end{definition}
\noindent
Condition (2) requires that all primitives of the predecessor leading to the projection cell of the successor are present in its configuration, while Condition (1) ensures that there are no other primitives in the successor.

\subsection{Transition Costs}

To ensure compatibility with standard lattice-based methods, transition cost between extended cells is derived from the costs of the underlying motion primitives. Let $c_{prim}$ denote the cost of a primitive $prim$ from the control set.

\begin{definition}[Transition Cost]
\label{def:cost}
Let $v$ be a successor of an extended cell $u$. The transition cost from $u$ to $v$ is defined as:
\begin{align*}
    cost(u,v) = 
     \begin{cases}
        c_{prim}, & \text{if } v \text{ is an initial successor } \\
        0, & \text{otherwise }
    \end{cases}
\end{align*}
In the first case, $prim$ is a primitive terminating at $v$ that satisfies the conditions of Definition~\ref{def:initial_successor}. If multiple such primitives exist, any of their costs can be chosen. We will show later that this ambiguity cannot arise in any relevant case. 
\end{definition}
Algorithm~\ref{algorithm_extended_cell_successors} details the successor generation for a given extended cell $u = (i, j, \Psi)$. It uses a dictionary $\texttt{Confs}$ (line 2) to temporarily store the forming configurations of non-initial successors, mapped by their displacement from $u$. A set $\texttt{Successors}$ is initialized (line 3) to hold the resulting pairs of all successors and their transition costs.

\begin{algorithm}[ht]
\caption{Generating Successors of an Extended Cell}
\label{algorithm_extended_cell_successors}
\textbf{Input:} Extended cell $u$\\
\textbf{Output:} Set of pairs of successors and transition costs\\
\textbf{Function name:} \textsc{GetSuccessors}($u$)
\begin{algorithmic}[1]
\State $i,j,\Psi \gets u$
\State $\texttt{Confs} \gets \{\}$  \Comment{Dictionary}
\State $\texttt{Successors} \gets \emptyset$ \Comment{Set of successors and costs}
\ForAll{$(prim,k) \in \Psi$}  
    \State $(a,b) \gets \Delta_k^{prim}$ 
    \If{$k = U_{prim}-1$} \Comment{Case 1}
        \State $\theta \gets$ angle at which $prim$ ends
        \State $\Psi_1 \gets \textsc{InitConf}(\theta)$
        \State $v_1 \gets (i+a, j+b, \Psi_1)$
        \State $\texttt{Successors}.\texttt{add}\{(v_1, c_{prim})\}$ 
    \ElsIf{$(a,b) \notin \texttt{Confs}$} \Comment{Case 2}
        \State $conf_{new} \gets \{(prim,k+1)\}$
        \State $\texttt{Confs}[(a,b)] \gets conf_{new}$
    \Else
        \State $\texttt{Confs}[(a,b)].\texttt{add}\{(prim,k+1)\}$ 
    \EndIf
\EndFor       
\ForAll{$(a,b) \in \texttt{Confs}$} 
    \State $\Psi_2 \gets \texttt{Confs}[(a,b)]$
    \State $v_2 \gets (i+a,j+b,\Psi_2)$
    \State $\texttt{Successors}.\texttt{add}\{(v_2, 0)\}$
\EndFor
\State \Return $\texttt{Successors}$
\end{algorithmic}
\end{algorithm}

The main loop (lines 4-15) iterates over each pair $(prim, k)$ in the configuration $\Psi$. For each pair, it computes the displacement $(a,b)$ to the next cell in the primitive's collision trace (line 5). This displacement determines the grid coordinates $(i+a,j+b)$ of a potential successor cell relative to $u$. Based on whether the primitive terminates (checked in line 6), one of two cases is handled.

Case 1: Initial Successor (lines 6-10). If a primitive is at its final step ($k = U_{prim}-1$), an initial successor is generated and immediately stored in $\texttt{Successors}$ (line 10) according to Definition~\ref{def:initial_successor}. The angle $\theta$ at which the primitive ends determines the initial configuration $\Psi_1 = \textsc{InitConf}(\theta)$ for this successor (line 8). The transition cost is set to $c_{prim}$, the cost of the primitive itself, as this motion is now complete (see Definition~\ref{def:cost}).

Case 2: Regular Successor (lines 11-15). If a primitive does not terminate ($k < U_{prim}-1$), it contributes to a non-initial, or regular, successor. The displacement $(a,b)$ is used as a key in the $\texttt{Confs}$ dictionary. If this displacement is encountered for the first time (line 11), a new entry is created, and its configuration is initialized with the current primitive at its next step, $(prim, k+1)$ (lines 12-13). If the key $(a,b)$ already exists (line 14), it signifies that multiple primitives from $\Psi$ lead to the same grid cell $(i+a, j+b)$. In this case, $(prim, k+1)$ is added to the existing configuration for this displacement (line 15). Thus, for each displacement $(a,b)$, the $\texttt{Confs}$ dictionary groups all non-terminating primitives from $\Psi$ whose next step corresponds to this displacement. This process forms the complete configuration for each potential non-initial successor and precisely matches the conditions of Definition~\ref{def:other_successor}.

Finally (lines 16-19), the algorithm iterates through the $\texttt{Confs}$ dictionary. For each displacement $(a,b)$ and its aggregated configuration $\Psi_2$, it forms a single non-initial successor $v_2 = (i+a, j+b, \Psi_2)$. The transition cost to such a successor is $0$, per Definition~\ref{def:cost}. Thus, the pair $(v_2, 0)$ is added to the $\texttt{Successors}$ set (line 19).

\subsection{MeshA*}

Having defined the elements of the search space as well as the successor relationship, we obtain a directed weighted graph. We term this the \textit{mesh graph} to distinguish it from both the environment's grid representation and the standard lattice graph of motion primitives. This graph structure proves essential for our subsequent theoretical analysis. 

We can utilize a standard heuristic search algorithm, i.e. A*, to search for a path on this graph. We will refer to this approach as \textit{MeshA*}. Next we will show that running MeshA* leads to finding the optimal solution, which is equivalent to the one found by A* on the lattice graph.

\begin{algorithm}[t]
\caption{Trajectory Reconstruction}
\label{algorithm_reconstruction}
\textbf{Input:} Path $u_1, u_2, \ldots, u_N$ in the mesh graph between\\ initial extended cells\\
\textbf{Output:} Trajectory (chain of primitives)
\begin{algorithmic}[1]
\State $\texttt{Traj} \gets \emptyset$
\State $p \gets u_1$
\ForAll{$l = 2, 3, \ldots, N$}
    \If{$u_l$ is initial} \Comment{$u_l$ is next initial after $p$}
        \State $i_1, j_1, \Psi_{\theta_1} \gets p$
        \State $i_2, j_2, \Psi_{\theta_2} \gets u_l$
        \State $s_1 \gets (i_1, j_1, \theta_1)$  \Comment{Obtain discrete states}
        \State $s_2 \gets (i_2, j_2, \theta_2)$
        \State $prim \gets$ primitive from $s_1$ to $s_2$
        \State $\texttt{Traj}.\texttt{add}\{prim\}$
        \State $p \gets u_l$
    \EndIf
\EndFor       
\State \Return $\texttt{Traj}$ 
\end{algorithmic}
\end{algorithm}

\section{Theoretical Results}

We now establish the equivalence between searching on the mesh graph and searching on the state lattice, which infers that finding an optimal path on the state lattice is equivalent to finding an optimal path on the mesh graph followed by trajectory reconstruction. 

This section provides only proof sketches for brevity, while the detailed proofs can be found in the Appendix.


\begin{lemma}[Uniqueness of Path Cost]
For any path on the mesh graph starting from an initial extended cell, the cost of each transition is uniquely defined.
\label{lem:cost_uniqueness}
\end{lemma}
\begin{proof}[Proof Sketch]
The only non-zero costs occur on transitions into initial extended cells. We prove by contradiction that the primitive inducing such a transition is unique. The key insight is that any primitive can be traced backward along the mesh path to its origin. If two distinct primitives induced the same transition, this backward tracing procedure would show they connect the same discrete start and end states, which contradicts our problem statement.
\end{proof}

\begin{theorem}[From State Lattice to Mesh Graph]
Let there be a trajectory on the state lattice from a discrete state $s_a = (i_a, j_a, \theta_a)$ to $s_b = (i_b, j_b, \theta_b)$. Then there exists a path on the mesh graph from the initial extended cell $u_a = (i_a, j_a, \Psi_{\theta_a})$ to another initial one $u_b = (i_b, j_b, \Psi_{\theta_b})$ that satisfies the following conditions:
\begin{enumerate}
    \item The cost of this path is equal to the cost of the trajectory.
    \item The projections of the vertices along this path precisely form the
collision trace of this trajectory.
\end{enumerate}
\label{thm:lattice_to_mesh}
\end{theorem}

\begin{proof}[Proof Sketch]
Proof is constructive, by induction on the number of primitives. For the base case of a single primitive, the path is constructed by starting at $u_a$ and iteratively applying the successor definition to step along the primitive's collision trace cell by cell. Each step to a non-final cell of the trace generates a regular successor with zero cost. The final step generates an initial successor $u_b$ with cost $c_{prim}$, ensuring the total path cost matches the primitive. The inductive step shows that a path for a longer trajectory is simply the concatenation of the mesh paths for its constituent primitives, ensuring the total cost and trace are preserved.
\end{proof}

\begin{theorem}[From Mesh Graph to State Lattice]
For any path on the mesh graph from an initial extended cell $u_a = (i_a, j_a, \Psi_{\theta_a})$ to another initial $u_b = (i_b, j_b, \Psi_{\theta_b})$, Algorithm~\ref{algorithm_reconstruction} reconstructs a trajectory
composed of primitives (corresponding to the path in the state lattice)
that transition from the discrete state $s_a = (i_a, j_a, \theta_a)$ to $s_b = (i_b, j_b, \theta_b)$ and satisfy the following conditions: 
\begin{enumerate}
    \item The cost of this trajectory is equal to the cost of the path in the mesh graph. 
    \item The collision trace of this trajectory coincides with the projections
of the vertices along this mesh graph path.
\end{enumerate}
\label{thm:mesh_to_lattice}
\end{theorem}

\begin{proof}[Proof Sketch]
The proof is constructive, formalizing the reconstruction algorithm~\ref{algorithm_reconstruction}. The core idea is that any mesh path between initial cells is uniquely decomposed into segments, each connecting two consecutive initial cells in the path. We establish that each segment corresponds to a single, unique motion primitive. Concatenating these primitives reconstructs the full trajectory, and the cost is preserved as it is only incurred at the end of each segment.
\end{proof}

\begin{theorem}[Equivalence of Optimal Pathfinding]
The search for a least-cost, collision-free trajectory between discrete states $s_a=(i_a, j_a, \theta_a)$ and $s_b=(i_b, j_b, \theta_b)$ is equivalent to performing two steps:
\begin{enumerate}
    \item Find a least-cost path on the mesh graph between the corresponding initial extended cells $u_a = (i_a, j_a, \Psi_{\theta_a})$ and $u_b = (i_b, j_b, \Psi_{\theta_b})$. This path must be collision-free, meaning the projection of every vertex on the path is a free grid cell.
    \item Recover the trajectory from this path using Algorithm~\ref{algorithm_reconstruction}.
\end{enumerate}
The resulting trajectory will be optimal and collision-free.
\label{thm:main}
\end{theorem}

\begin{proof}[Proof Sketch]
The proof rests on the bidirectional, cost-preserving correspondence from Theorems \ref{thm:lattice_to_mesh} and \ref{thm:mesh_to_lattice}. Theorem \ref{thm:lattice_to_mesh} guarantees that any optimal collision-free trajectory can be mapped to a mesh path of identical cost. Since the path's projections match the trajectory's trace, this path is also collision-free. Conversely, Theorem \ref{thm:mesh_to_lattice} ensures that any optimal collision-free mesh path can be reconstructed into a collision-free trajectory of the same cost. Since a path exists in one space if and only if a path of the same cost exists in the other, their optimal costs must be identical, making the proposed two-step procedure both correct and complete.
\end{proof}

\begin{figure*}[th!]
\centering
\includegraphics[width=0.98\textwidth]{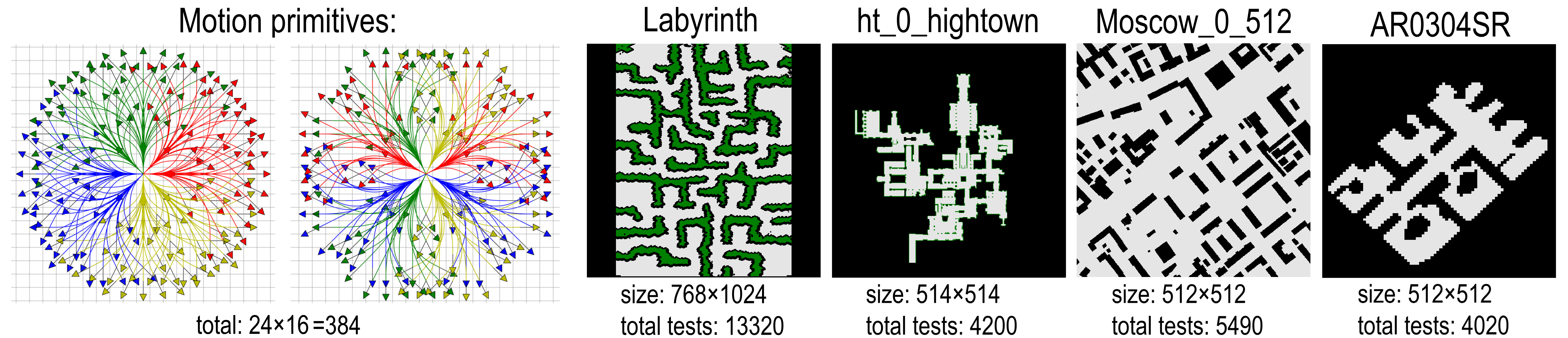}
\caption{Experimental setup: control set and MovingAI maps.}
\label{setup}
\end{figure*}

\section{On The Efficiency Of MeshA*}

MeshA* is not a new search algorithm, but the standard A* applied to a novel search space, the mesh graph (hence, we omit its pseudocode). Theorem~\ref{thm:main} established that A* on the mesh graph (MeshA*) is equivalent to A* on the state lattice (LBA*). We now show how the inherent structure of the mesh graph enables cell-level search-space pruning techniques unavailable to LBA*. We believe this structural advantage is the main reason MeshA* notably outperforms LBA*, as shown later in the Empirical Evaluation section.

We first define the heuristic function for MeshA*. To this end, we introduce a new concept related to an extended cell.

\begin{definition}[Primitive Endpoints]
For a non-initial extended cell $u=(i, j, \Psi)$, the set $\texttt{Finals}(u)$ is defined as:
\begin{align*}
    \texttt{Finals}(u) = \{ (u_{prim}, c_{prim}) \mid \exists (prim, k) \in \Psi \}
\end{align*}
where $u_{prim}$ is the initial extended cell where the instance of $prim$ passing through $u$ terminates.
\label{def:finals}
\end{definition}

Intuitively, $\texttt{Finals}(u)$ captures the information about completing each of the primitives passing through $u$. This is used to define the heuristic function for MeshA*.

\begin{definition}[Heuristic for MeshA*]
Let $h_{LBA*}$ be an (admissible and consistent) heuristic function for lattice-based A* (LBA*). Then, for any extended cell $u = (i,j,\Psi)$, the heuristic function for the MeshA* is:
\begin{align*}
    h(u) = 
     \begin{cases}
        h_{LBA*}(i, j, \theta),~~~~~~\text{if } u \text{ is initial with } \Psi = \Psi_{\theta}\\
        \min_{(u_0, c_0) \in \text{Finals}(u)} \{h(u_0) + c_0\},~~~\text{otherwise}
    \end{cases}
\end{align*}
\end{definition}

This definition preserves the admissibility and consistency of $h_{LBA*}$. For initial cells, it matches $h_{LBA*}$ directly. For non-initial cells, since any path from $u$ must continue through one of its primitive endpoints $u_0$ (at a cost of $c_0$ to complete), taking the minimum $\min(c_0 + h(u_0))$ over all such options enforces consistency by definition.

On the other hand, such a heuristic leads to a more extensive pruning of the search space for MeshA* for the following reason. For any non-initial cell $u$ $h(u)$ estimates the cost of the best possible trajectory that can be completed from $u$. Crucially, this allows us to evaluate the promise of each potential primitive endpoint in $\texttt{Finals}(u)$ independently. If one primitive leads towards a region with a high heuristic cost, the search will naturally deprioritize such paths, effectively abandoning that branch of the search long before the primitive is fully traversed. In other words, unlike LBA* the introduced method, MeshA*, may detect the unperspective search branches much earlier, thanks to the cell-by-cell nature of the search.

Next, the structure of the mesh graph also enables a powerful terminal pruning rule. A non-initial extended cell $u$ can be safely pruned (i.e., its expansion skipped) if all of its endpoint cells in $\texttt{Finals}(u)$ have already been expanded. Indeed, since any path from $u$ must pass through one of these endpoints, and our search strategy with a consistent heuristic guarantees optimality (or bounded suboptimality in the weighted case) without re-openings \cite{wo_reexp}, further exploration from $u$ is redundant.

\section{Empirical Evaluation}

\textbf{Setup.} In the experiments we utilize $16$ different headings and generate $24$ motion primitives for each heading using the car-like motion model. Experiments are conducted on four MovingAI benchmark~\cite{movingai} maps of varying topology. Start-goal pairs are taken from benchmark scenarios, with three randomly generated headings per pair, yielding over 25,000 instances in total. See Figure~\ref{setup}.

The following algorithms were evaluated:
\begin{enumerate}
    \item \textbf{LBA*} (short for Lattice-based A*): The standard A* algorithm on the state lattice, serving as the baseline.
    \item \textbf{LazyLBA*}: The same algorithm that conducts collision-checking lazily.
    \item \textbf{MeshA*} (ours): Running A* on the mesh graph.
\end{enumerate}

The cost of each primitive is its length, and the heuristic is the Euclidean distance. We test heuristic weights $w \in \{1, 1.1, 2, 5, 10\}$, where larger weights speed up search but increase solution cost~\cite{wastar}.

To ensure a fair comparison, we implement all algorithms in C++ with identical data structures (e.g., priority queue) and the same underlying A* logic for both the mesh graph (MeshA*) and the state lattice (LBA*). Both implementations include \textit{g-value pruning} (i.e., avoiding exploration a state that already has a known better g-value). To avoid \texttt{DecreaseKey} operations in the \texttt{PriorityQueue} we perform this "lazily": all successors are added to \texttt{OPEN}, but a search node is discarded upon extraction if its state is already in \texttt{CLOSED}. This strategy correctly guarantees optimality (for $w=1$) and bounded suboptimality (for $w>1$) without re-opening \cite{wo_reexp}.

All experiments were conducted on AMD EPYC 7742 under identical conditions. We primarily measure runtime, as other metrics like nodes generated and expansions are not directly comparable due to fundamentally different search approaches: LBA* expands motion primitives while MeshA* expands extended cells. Additionally, we report \textit{checked cells} (total cells in collision traces examined) as a processor-independent metric that highlights MeshA*'s cell-by-cell search advantage.

\begin{figure}[t]
\centering
\includegraphics[width=0.47\textwidth]{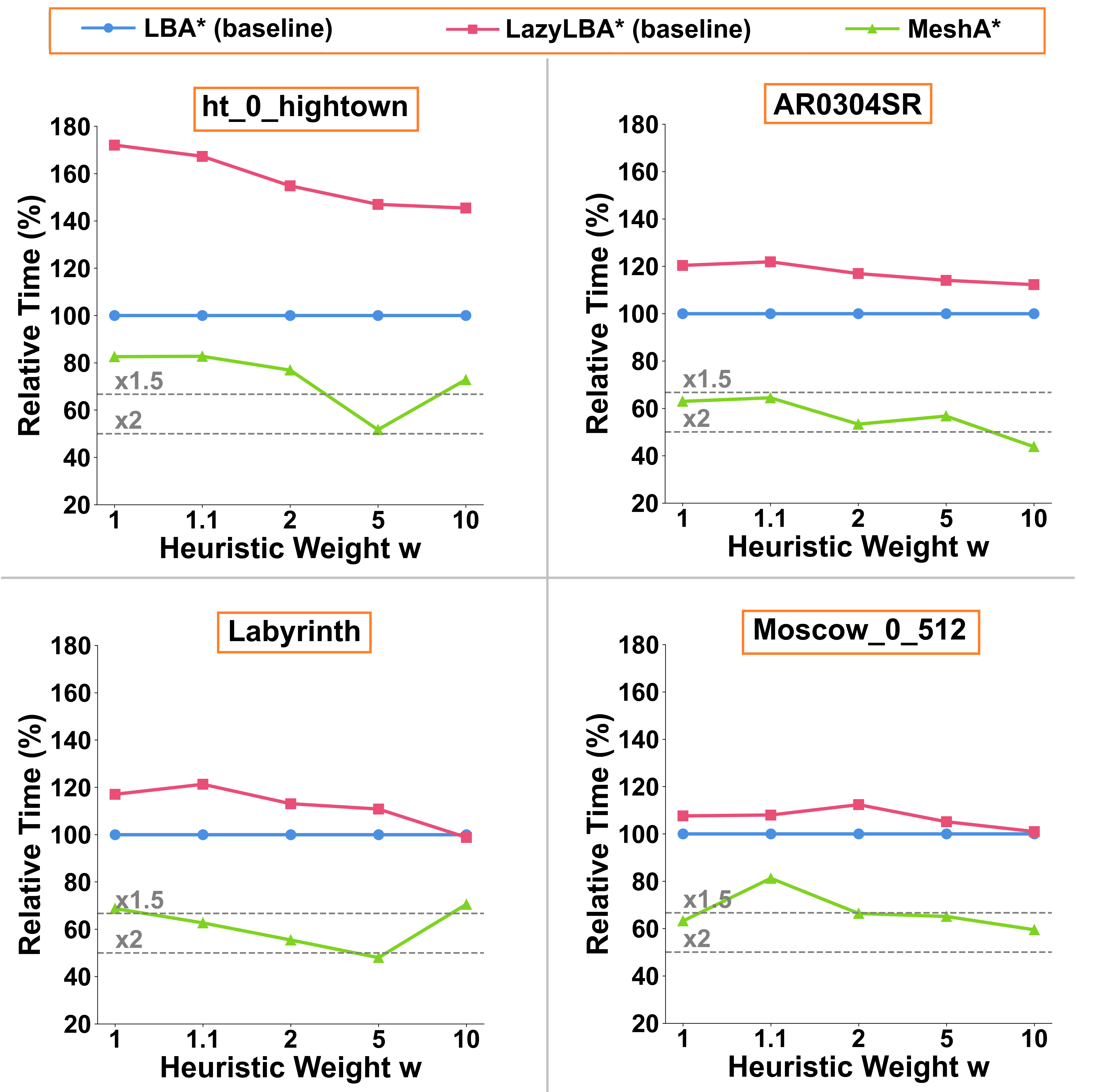}
\caption{Median runtime of the evaluated algorithms (relative to the runtime of LBA*). The lower -- the better. 50\% corresponds to x2 speed-up.}
\label{exp_time}
\end{figure}

\noindent \textbf{Results.} Fig.~\ref{exp_time} illustrates the median runtime of each algorithm as a ratio to LBA*. That is, the runtime of LBA* is considered 100\% in each run, and the runtime of the other solvers is divided by this value. Thus, the lower the line is on the plot -- the better. Clearly, MeshA* consistently outperforms the baselines. For $w=1$, it consumes on average $60\%$-$80\%$ of the LBA* runtime (depending on the map). That is, the speed-up is up to 1.5x. When the heuristic is weighted, this gap gets even more pronounced, and we can observe x2 speed-up.

Interestingly, LazyLBA* consistently performs worse than LBA*. This can be attributed to the simplicity of collision checking in our experiments, which involves verifying that the cells in the collision trace of each primitive are not blocked. This is a quick check in our setup, and postponing it does not make sense. Moreover, lazy collision checking begins to incur additional time costs from numerous pushing and popping of unnecessary vertices to the search queue, which standard LBA* prunes through collision checking. This effect is especially visible on the \texttt{ht\_0\_hightown} map, which contains numerous narrow corridors and passages where many primitives lead to collisions.

Fig.~\ref{exp_checked} demonstrates another crucial advantage of MeshA*. Remarkably, MeshA* processes only $\approx 50\%$ of the cells that LazyLBA* examines. We compare against LazyLBA* as it processes fewer cells than standard LBA* by deferring collision checks until primitive expansion. This significant reduction highlights MeshA*'s cell-by-cell search advantage: it can terminate unpromising primitives early, while LBA* must process entire primitives even when only their initial segments are relevant (see prev. section).

The costs of the obtained solutions are summarized in Table~\ref{tab:algorithm_costs} for the \texttt{ht\_0\_hightown} map (results for other maps are similar and provided in the Appendix). Each cell shows the median cost relative to the optimal one, thus lower values are better ($100\%$ indicates optimal solutions). As expected, LBA*, LazyLBA*, and MeshA* are optimal.

\begin{figure}[t]
\centering
\includegraphics[width=0.47\textwidth]{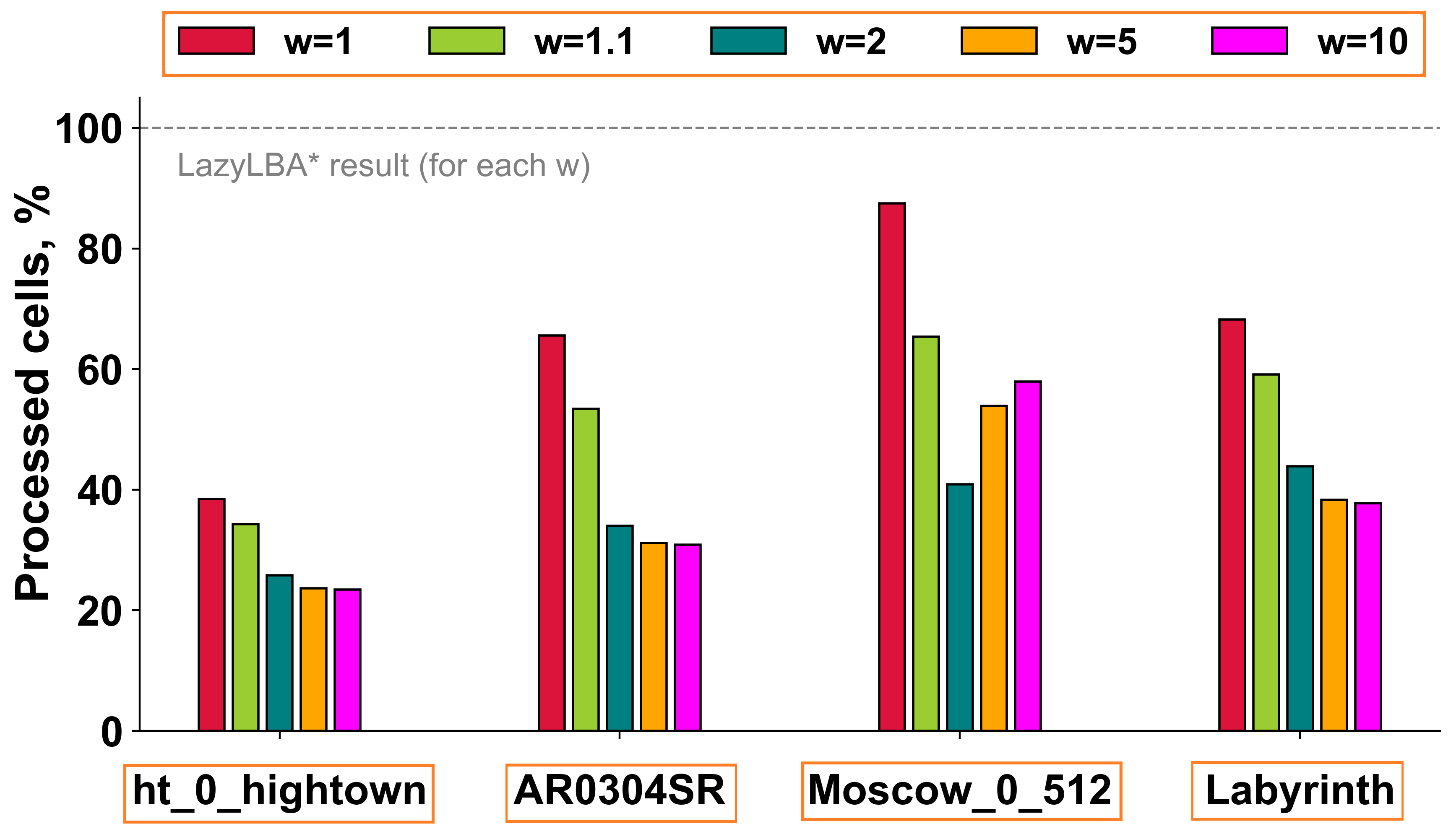}
\caption{Median number of processed grid cells (sum over all collision traces) for MeshA* relative to LazyLBA*}
\label{exp_checked}
\end{figure}

Another observation is that MeshA*'s solution costs increase slightly more with rising $w$ than LBA*'s. This is because LBA* computes the heuristic only twice per primitive (start/end), while MeshA* computes it at every cell. Thus, the weight $w$ has a more pronounced effect on MeshA*. This same property, however, explains MeshA*'s improved performance with weighted heuristics: the more frequent evaluations allow for more aggressive pruning of unpromising directions, achieving the demonstrated 2x speedup.

Overall, our experiments confirm that the suggested search approach, MeshA*, consistently outperforms the baselines and is much faster in practice (1.5x faster when searching for optimal solutions and 2x faster when searching for the suboptimal ones).

\begin{table}[t]
    \centering
    \begin{tabular}{c|cccc}
        \toprule
        \multirow{2}{*}{Algorithms}& \multicolumn{4}{c}{\texttt{ht\_0\_hightown}}\\ 
        \cmidrule(lr){2-5} 
        & $w=1$ & $w=2$ & $w=5$ & $w=10$ \\
        \midrule
        LBA* & {$100.0$} & {$105.7$} & {$110.0$} & {$113.2$} \\
                LazyLBA* & {$100.0$} & {$105.7$} & {$110.0$} & {$113.2$} \\
                MeshA* & {$100.0$} & {$109.2$} & {$117.6$} & {$122.3$} \\
        \bottomrule

    \end{tabular}
    \caption{Median relative costs of the trajectories found as a percentage of the optimal cost.}
    \label{tab:algorithm_costs}
\end{table}

\section{Conclusion}
In this paper, we have considered a problem of finding a path composed of motion primitives that are aligned with the grid representation of the workspace. We have suggested a novel way to systematically search for a solution by reasoning over the sequences of augmented grid cells rather than over sequences of motion primitives. The resultant solver, MeshA*, is provably complete and optimal and is notably faster than the regular A* search on motion primitives.

Despite having considered path planning in 2D when the agent's state is defined as $(x, y, \theta)$, the idea that stands behind MeshA* is applicable to pathfinding in 3D as well as to the cases where the agent's state contains additional variables as long as they can be discretized. Adapting MeshA* to such setups is a prominent direction for future work.

\bibliography{aaai2026}

\section{Appendix A. Proofs of Theoretical Statements}

\subsection{Lemma 1}
\begin{proof}
Let a path \( u_1, u_2, \ldots, u_N \) on the mesh graph start at an initial extended cell $u_1$. We use the notation \( u_l := (i_l, j_l, \Psi_l) \) for each $u_l$ in the path.

Consider any transition $u_{l-1} \to u_l$ in this path. If $u_l$ is not an initial extended cell, its cost is uniquely defined as $0$. Suppose $u_l=(i_l, j_l, \Psi_{\theta_l})$ is initial. By Definition~\ref{def:cost}, the transition cost is $c_{prim}$ for a primitive $prim$ satisfying the conditions of an initial successor (see Definition~\ref{def:initial_successor}). We must prove this $prim$, and thus its cost, is unique.

Assume for contradiction that there are two distinct such primitives, $prim_A$ and $prim_B$. This means that for both $prim \in \{prim_A, prim_B\}$:
\begin{align*}
    &1.~~(prim, U_{prim}-1) \in \Psi_{l-1}\\
    &2.~~prim~\text{ terminates at angle }~\theta_l\\
    &3.~~\text{The displacement matches the step to }~u_k:\\
    &~~~~~~\Delta_{U_{prim}-1}^{prim} = (i_l - i_{l-1}, j_l - j_{l-1})
\end{align*}

 Let, for brevity, $k = U_{prim}-1$. We will now perform a procedure we call \textbf{tracing a primitive backward} along the mesh graph path. Since $(prim, k) \in \Psi_{l-1}$, by the definition of a successor, it follows that $(prim, k-1) \in \Psi_{l-2}$ and $\Delta_{k-1}^{prim} = (i_{l-1}-i_{l-2},~j_{l-1}-j_{l-2})$. Continuing in a similar manner, we obtain that $(prim, 1) \in \Psi_{l-k}$ and $\forall t \in \{l-k, \ldots, l-1\}$ it holds: $\Delta_{k-(l-1)+t}^{prim} = (i_{t+1} - i_t,~j_{t+1} - j_t)$. 

Note that the vertex $u_{l-k}$ exists. Indeed, since $l > 1$, among the previous vertices relative to $u_l$ on the path, there must be at least one initial vertex (at the very least, $u_1$). Therefore, by continuing in a similar manner, we will inevitably reach the extended cell where $prim$ originated, namely $u_{l-k}$. It is also important to note that the extended cell $u_{l-k}$ is initial, as it is either equal to $u_1$ (in which case the fact that $u_{l-k}$ is initial follows from the condition) or it is a successor of some $u_{l-k-1}$. In this case, $u_{l-k}$ will also be initial, as this is the only scenario according to the definition of a successor where a configuration can contain a pair of a primitive with $1$. Thus, since $u_{l-k}$ is initial, we have $\Psi_{l-k} = \Psi_{\phi}$ for some discrete angle $\phi$. 

Finally, we have obtained that $(prim, 1) \in \Psi_{\phi}$ and for all $t$ such that $l-k \leq t \leq l-1$, it holds that $\Delta_{k-(l-1)-t}^{prim} = (i_{t+1} - i_t,~j_{t+1} - j_t)$. Considering that $prim$ is a primitive of the control set (i.e., it is regarded as a motion template), the obtained result guarantees that if a copy of $prim$ is placed at $(i_{l-k}, j_{l-k})$, it will yield a primitive that transitions from the discrete state $s_1 = (i_{l-k}, j_{l-k}, \phi)$ to another discrete state $s_2 = (i_l, j_l, \theta)$. 

This reasoning can be applied both to $prim_A$ and $prim_B$. As a result, we have now established that both $prim_A$ and $prim_B$ connect the same discrete start state $s_1$ to the same discrete end state $s_2$. Since we assumed $prim_A$ and $prim_B$ are distinct, this contradicts our problem's core assumption that at most one primitive connects any two discrete states.
Therefore, the primitive inducing the transition to an initial extended cell is unique, making its cost $c_{prim}$ uniquely defined. The lemma holds.
\end{proof}

\subsection{Theorem 2}

\begin{proof}
The proof is by induction on the number of primitives, $N$, in the trajectory.

\textbf{Base Case ($N=1$):} The trajectory consists of a single primitive, $prim$, from $s_a$ to $s_b$. Its collision trace is a sequence of cells $(c_1, \ldots, c_m)$, where $c_1=(i_a, j_a)$ and $c_m=(i_b, j_b)$. By definition, $(prim, 1) \in \Psi_{\theta_a}$. We construct a mesh path starting with $u_1 = (c_1, \Psi_{\theta_a})$. For each step from cell $c_k$ to $c_{k+1}$ along the primitive, there is a displacement $\Delta_k^{prim} = c_{k+1} - c_k$. By Definition \ref{def:other_successor}, since $(prim,k) \in \Psi_k$ (the configuration of extended cell $u_k$), there exists a successor $u_{k+1}=(c_{k+1}, \Psi_{k+1})$ where $(prim, k+1) \in \Psi_{k+1}$. This constructive process yields a path $u_1, \ldots, u_m$ whose projections match the collision trace. The final transition $u_{m-1} \to u_m$ is to an initial cell (by Definition 2), as $prim$ completes. By Definition \ref{def:cost}, its cost is $c_{prim}$, while all prior transition costs are 0. The total path cost is thus $c_{prim}$, matching the trajectory's cost.

\textbf{Inductive Step:} Assume the theorem holds for any trajectory of $N-1$ primitives. A trajectory of $N$ primitives from $s_a$ to $s_b$ can be decomposed into a trajectory of $N-1$ primitives from $s_a$ to some intermediate state $s_{mid}=(i_m,j_m,\theta_m)$, followed by a final primitive from $s_{mid}$ to $s_b$.
By the induction hypothesis, a mesh path $P_{N-1}$ exists from $u_a$ to $u_{mid}=(i_m,j_m,\Psi_{\theta_m})$ with the same cost and trace. By the base case, a path $P_1$ exists from $u_{mid}$ to $u_b$ for the final primitive. Concatenating these at the shared initial cell $u_{mid}$ yields a single mesh path $P_N = P_{N-1} \circ P_1$ from $u_a$ to $u_b$ whose cost and projected trace match the full trajectory. The induction holds.
\end{proof}

\subsection{Theorem 3}
\begin{proof}
The proof is a constructive argument demonstrating the correctness of Algorithm~\ref{algorithm_reconstruction}. The core idea is to show that any mesh path between two initial cells can be uniquely decomposed into a sequence of segments, where each segment corresponds to exactly one motion primitive.

Let the mesh path be $P = (u_1, u_2, \ldots, u_N)$, where $u_1=u_a$ and $u_N=u_b$. Algorithm~\ref{algorithm_reconstruction} works by iterating through the sequence of initial extended cells on this path. Let these initial cells be $u_{p_1}, u_{p_2}, \ldots, u_{p_M}$, where $p_1=1$ and $p_M=N$. The algorithm considers segments of the path between any two consecutive initial cells, e.g., from $u_{p_j}$ to $u_{p_{j+1}}$.

For each such segment, we must show that it corresponds to a unique motion primitive. Let's consider the segment from $u_p = (i_p, j_p, \Psi_{\theta_p})$ to the next initial cell $u_l = (i_l, j_l, \Psi_{\theta_l})$. From the proof of Lemma~\ref{lem:cost_uniqueness}, the transition into $u_l$ must be induced by a unique primitive, let's call it $prim$. Now we must show that this $prim$ fully accounts for the entire mesh path segment from $u_p$ to $u_l$.

Using the \textbf{tracing a primitive backward} procedure, as detailed in the proof of Lemma~\ref{lem:cost_uniqueness}, we trace $prim$ back from $u_l$. This procedure follows the mesh path vertices backward, step-by-step. Since there are no other initial cells between $u_p$ and $u_l$, this tracing process must necessarily lead all the way back to $u_p$. This confirms that $prim$ originates from the initial state $s_p = (i_p, j_p, \theta_p)$ and terminates at $s_l = (i_l, j_l, \theta_l)$.

Furthermore, the sequence of displacements $\Delta$ during the backward trace of $prim$ must, by definition of the successor relationship, match the displacements between the corresponding vertices of the mesh path segment. This means the projection of the vertices of the mesh path segment $(u_p, \ldots, u_l)$ exactly matches the collision trace of this unique primitive $prim$.

Algorithm~\ref{algorithm_reconstruction} formalizes this decomposition. It iterates through the consecutive initial cells, identifies the unique primitive for each segment, and adds it to the trajectory. The total cost of the resulting trajectory is the sum of the costs of these identified primitives. The cost of each mesh path segment is precisely the cost of its corresponding primitive (as all other internal transitions have zero cost). Therefore, the total trajectory cost equals the total mesh path cost. The collision trace property also holds for the full trajectory by concatenating the traces of each primitive segment.

Thus, Algorithm~\ref{algorithm_reconstruction} correctly reconstructs a trajectory from the mesh graph path, satisfying both conditions from the statement of theorem.
\end{proof}

\subsection{Theorem 4 (Main Theorem)}
\begin{proof}
Let $c^*$ be the cost of an optimal (least-cost, collision-free) trajectory on the state lattice from $s_a$ to $s_b$. As this trajectory is collision-free, its entire collision trace consists of free cells. By Theorem~\ref{thm:lattice_to_mesh}, there exists a corresponding mesh path from $u_a$ to $u_b$ with the same cost $c^*$. The projections of this mesh path's vertices coincide with the trajectory's collision trace, and are therefore also composed entirely of free cells. This means the mesh path is collision-free by our definition. The existence of such a path implies that the cost of an optimal collision-free mesh path, $c'_{opt}$, can be no more than $c^*$, i.e., $c'_{opt} \le c^*$.

Conversely, let an optimal collision-free mesh path from $u_a$ to $u_b$ exist with cost $c'_{opt}$. By Theorem~\ref{thm:mesh_to_lattice}, Algorithm~\ref{algorithm_reconstruction} can construct a trajectory from this path with the same cost $c'_{opt}$. The collision trace of this trajectory will coincide with the mesh path's vertex projections. Since the mesh path is collision-free, its vertex projections are all free cells, meaning the trajectory is also collision-free. The existence of such a trajectory implies that the cost of an optimal state-lattice trajectory, $c^*$, can be no more than $c'_{opt}$, i.e., $c^* \le c'_{opt}$.

Combining the two inequalities, we conclude that the optimal costs are identical: $c'_{opt} = c^*$.

Therefore, the procedure outlined is correct and complete. Step 1, finding a least-cost, collision-free path on the mesh graph (e.g., using A*), will yield a path with the optimal cost $c^*$. Step 2, applying Algorithm~\ref{algorithm_reconstruction}, will then convert this optimal mesh path into a trajectory that is also optimal (as it has cost $c^*$) and collision-free. This establishes the equivalence.
\end{proof}

\section{Appendix B. Implementation Details}

The performance of any search-based planner is tied to the efficiency of its fundamental operations, primarily successor generation. Both LBA* and MeshA* operate on graphs derived from the same set of motion primitives, but their different state representations necessitate different implementation strategies for this core task. In this section, we detail the implementation of MeshA* that ensures its efficiency is comparable to LBA*.

\subsection{On-the-Fly Successor Generation}

To conserve memory, search-based planners typically generate their search graph on-the-fly, expanding only the nodes relevant to the current problem instance. This means that successors for any given node must be computed efficiently at the moment of its expansion.

For LBA*, a node in the search graph is a discrete state $s=(i, j, \theta)$. Successor generation from such a state is straightforward. The heading $\theta$ directly determines which primitives from the control set are applicable. For each applicable primitive, its geometric properties -- specifically, its displacement $(\delta_i, \delta_j)$ and its final heading $\theta'$ -- are pre-defined and known in advance. Consequently, computing a successor state is a simple and computationally inexpensive operation: for a primitive with displacement $(\delta_i, \delta_j)$ and final heading $\theta'$, the successor state is simply $(i+\delta_i, j+\delta_j, \theta')$.

For MeshA*, a node is an extended cell $u=(i, j, \Psi)$. A direct implementation of the successor definitions (Definitions~\ref{def:initial_successor}, \ref{def:other_successor}, and \ref{def:cost}) would require a runtime analysis of the configuration $\Psi$, which is computationally expensive.

To enable MeshA* to generate successors with an efficiency comparable to LBA*, we perform a one-time pre-computation stage. This stage involves two steps: first, numbering all reachable configurations to create a compact state representation, and second, building a precomputed transition table. This is conceptually analogous to the initial generation of the control set itself: both are one-time costs for a given motion model, performed before any planning tasks. 

Ultimately, LBA* and MeshA* can be seen as two different ways of structuring the same underlying search space defined by the control set. LBA* structures it as a state lattice, while MeshA* structures it as a mesh graph of extended cells. To search either structure efficiently, some form of pre-processing is required: for LBA*, this is the generation of the control set itself; for MeshA*, it is the analysis of this control set's connectivity, which we detail in the following sections.

\subsection{Numbering Configurations of Primitives}
It is important to note that the configuration of the primitives $\Psi$ is a complex object to utilize directly in the search process. However, it is evident that there is a finite number of such configurations (since the number of primitives in the control set is finite), allowing us to assign a unique number to each configuration and use only this number in the search instead of the complex set.

\begin{algorithm}[htbp]
\caption{Numbering Configurations of Primitives}
\label{algorithm_dfs}
\textbf{Output:} Dictionary $\texttt{Numbers}$ that maps each configuration of primitives to its corresponding number (i.e., its ID)\\
\textbf{Function name:} \textsc{MainProcedure}()
\begin{algorithmic}[1]
\State $n \gets 0$  \Comment{Initialize the variable for the number}
\State $\texttt{Numbers} \gets \{\}$
\ForAll{discrete heading $\theta$}
    \State $\Psi \gets \textsc{InitConf}(\theta)$
    \State $u \gets (0, 0, \Psi)$  \Comment{Next initial cell}
    \State \textsc{NumberingDFS}($u$) \Comment{Run DFS defined below}
\EndFor       
\State \Return $\texttt{Numbers}$
\end{algorithmic}
\vspace{0.3cm}
\textbf{Function name:} \textsc{NumberingDFS}($u$)
\begin{algorithmic}[1]
\State $i, j, \Psi \gets u$
\If{$\Psi \in \texttt{Numbers}$} 
     \State \Return
\EndIf
\State $\texttt{Numbers}[\Psi] \gets n$
\State $n \gets n+1$
\ForAll{$(v, cost) \in \textsc{GetSuccessors}(u)$}
    \State $\textsc{NumberingDFS}(v)$
    \State \textcolor{gray}{\small // Optional: Precompute transition table (see App. B)}
    \State \textcolor{gray}{\small // Let $\Psi$ and $v.\Psi$ have IDs $n_u, n_v$ from \texttt{Numbers}}
\State \textcolor{gray}{$\texttt{TransTable}[n_u].\text{add}\left\{(n_v,\ cost,\ v.i\!-\!i,\ v.j\!-\!j)\right\}$}
\EndFor
\end{algorithmic}
\end{algorithm}

According to Theorem \ref{thm:main}, we are only interested in paths originating from some initial extended cell. Therefore, it makes sense to number only the configurations that are reachable from these extended cells (which will be defined more formally below). To achieve this, we can fix a set of extended cells with all possible initial configurations of the primitives (the number of which corresponds to the number of discrete headings) and then perform a depth-first search from each of these cells. This search will assign a number to each newly encountered configuration while stopping at already numbered configurations. The pseudocode for the described idea is implemented in Algorithm \ref{algorithm_dfs}. 

\begin{theorem}
For any extended cell $u' = (i', j', \Psi')$ that is reachable from some initial extended cell $u = (i, j, \Psi_\theta)$ via a path on the mesh graph, Algorithm \ref{algorithm_dfs} will assign a number to the configuration of primitives $\Psi'$:
\begin{align*}
    &\Psi' \in \textsc{MainProcedure}()
\end{align*}
Note: All such $\Psi'$ (which are in the extended cell reachable from some initial one) are referred to as reachable.
\label{thm:number}
\end{theorem}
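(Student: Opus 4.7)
My plan is to exploit the regularity of the mesh graph to reduce the claim to a standard depth-first search argument on an abstract \emph{configuration graph}. The proof of Theorem~\ref{thm:thm2} observed that the mesh graph is regular: if $(i_1,j_1,\Psi_1)$ has $(i_2,j_2,\Psi_2)$ as a successor, then so does $(i_1+\delta_i, j_1+\delta_j, \Psi_1)$ have $(i_2+\delta_i, j_2+\delta_j, \Psi_2)$, for every $\delta_i,\delta_j \in \mathds{Z}$. Consequently, the set of configurations appearing among the successors of an extended cell depends only on its configuration $\Psi$, never on its coordinates. This lets me introduce a well-defined relation $\Psi_1 \rightsquigarrow \Psi_2$ meaning ``some (equivalently, every) extended cell with configuration $\Psi_1$ has a successor whose configuration is $\Psi_2$'', and write $\rightsquigarrow^*$ for its reflexive-transitive closure.

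Next, I would note that \texttt{MainProcedure} invokes \texttt{NumberingDFS} on $(0,0,\Psi_\theta)$ for every discrete heading $\theta$, so in particular for the $\theta$ of the statement. Because \texttt{NumberingDFS} tests membership in $Numbers$ only against the configuration component, and because by the regularity observation above the configurations returned by \texttt{GetSuccessors} depend only on the input configuration, the algorithm behaves exactly like a textbook DFS on the configuration graph whose edge set is $\rightsquigarrow$. The key invariant I would then prove by induction (essentially the standard white-path argument for DFS) is: whenever \texttt{NumberingDFS} is called on a cell whose configuration $\Psi$ is not yet in $Numbers$, every configuration in the $\rightsquigarrow^*$-image of $\Psi$ is in $Numbers$ by the time the call returns.

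With that invariant in hand, the conclusion is immediate. Given a mesh graph path $u = w_0, w_1, \ldots, w_m = u'$ whose vertices have configurations $\Psi_\theta = \Psi_0, \Psi_1, \ldots, \Psi_m = \Psi'$, each edge $w_k \to w_{k+1}$ witnesses $\Psi_k \rightsquigarrow \Psi_{k+1}$, so $\Psi_\theta \rightsquigarrow^* \Psi'$; the top-level call from $(0,0,\Psi_\theta)$ therefore numbers $\Psi'$, which is precisely what is claimed. The hard part will be carefully formalizing the DFS invariant, specifically the case where the recursive call returns immediately because the current configuration is already in $Numbers$: one must argue that the $\rightsquigarrow^*$-closure of this $\Psi$ has already been covered by whichever earlier call first numbered it, regardless of the coordinates at which that earlier call took place. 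The regularity observation above is precisely what legitimizes this step, since it guarantees that ``configurations reachable from $\Psi$'' is coordinate-independent and therefore already pinned down by the first numbering of $\Psi$.
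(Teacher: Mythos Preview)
Your proposal is correct and follows essentially the same approach as the paper: both exploit regularity to make the successor relation on configurations coordinate-independent, then reduce the claim to the standard fact that DFS visits every vertex reachable from its starting set. The only cosmetic difference is that you phrase this via a forward invariant (white-path lemma), while the paper argues by contradiction using the first unnumbered configuration along the given path; these are two routine presentations of the same DFS reachability argument.
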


\begin{proof}
Let the path on the mesh graph, as mentioned in the condition, be of the form $u_1, \ldots, u_N$, where $u_l := (i_l, j_l, \Psi_l)$ for any $l \in [1, N]$. In this case, $i_1 = i, j_1 = j, i_N = i', j_N = j'$, and $\Psi_1 = \Psi_\theta, \Psi_N = \Psi'$. Note that the mesh graph is regular (i.e., invariant under parallel translation), so we can assume $i = 0, j = 0$ (we can always consider such a parallel translation to move $(i,j)$ to $(0,0)$; the successor relationship in the mesh graph does not depend on the parallel translation, which is the essence of regularity, and thus the sequence $u_1, \ldots, u_N$ will remain a path after such a shift).

Now, we will proceed by contradiction. Suppose $\Psi_N = \Psi'$ is not numbered. Consider the chain of configurations of primitives $\Psi_1, \ldots, \Psi_N$. Let $\Psi_l$ be the first configuration in this chain that has not been assigned a number. Note that $u_1 = (0, 0, \Psi_\theta)$ is one of the extended cells that the \textsc{MainProcedure} will take at line $5$, and from which it will launch \textsc{NumberingDFS} at line $6$. The latter will assign a number to the configuration $\Psi_\theta$ at line $5$, so $\Psi_1 = \Psi_\theta$ is certainly numbered, which means $l > 1$. 

In this case, we conclude that $\Psi_{l-1}$ exists and is also numbered, meaning that for some extended cell $v := (i'', j'', \Psi_{l-1})$, $\textsc{NumberingDFS}(v)$ was launched. However, in line $7$ of this procedure, all successors of $v$ must have been considered. Since $u_l = (i_l, j_l, \Psi_l)$ is a successor of $u_{l-1} = (i_{l-1}, j_{l-1}, \Psi_{l-1})$, and the successor relationship do not depend on specific coordinates due to regularity, there must be an extended cell among the successors of $v$ with the configuration $\Psi_l$. Therefore, \textsc{NumberingDFS} should have been launched from it in line $8$, which would assign it a number. This leads us to a contradiction, thus proving the theorem.
\end{proof}

Thus, the \textsc{MainProcedure} will number all configurations of primitives that may occur during the pathfinding process for trajectory construction (since, as stated in Theorem \ref{thm:main}, we only search for paths from the initial extended cell). Therefore, MeshA* can work with these numbers instead of the configurations of primitives to simplify operations, thereby accelerating the search.

\begin{table*}[ht]
    \centering
    \setlength{\tabcolsep}{0.6mm}
    \small
    \begin{tabular}{c|cccc|cccc|cccc|cccc}
        \toprule
        \multirow{2}{*}{Algorithms}& \multicolumn{4}{c|}{\texttt{Labyrinth}}& \multicolumn{4}{c|}{\texttt{ht\_0\_hightown}}& \multicolumn{4}{c|}{\texttt{Moscow\_0\_512}}& \multicolumn{4}{c}{\texttt{AR0304SR}}\\ 
        \cmidrule(lr){2-5} \cmidrule(lr){6-9} \cmidrule(lr){10-13} \cmidrule(lr){14-17} 
        & $w=1$ & $w=2$ & $w=5$ & $w=10$ & $w=1$ & $w=2$ & $w=5$ & $w=10$ & $w=1$ & $w=2$ & $w=5$ & $w=10$ & $w=1$ & $w=2$ & $w=5$ & $w=10$ \\
        \midrule
        LBA* & {$100.0$} & {$105.4$} & {$109.9$} & {$112.3$} & {$100.0$} & {$105.7$} & {$110.0$} & {$113.2$} & {$100.0$} & {$105.9$} & {$108.9$} & {$110.8$} & {$100.0$} & {$105.0$} & {$109.0$} & {$111.3$} \\
                LazyLBA* & {$100.0$} & {$105.4$} & {$109.9$} & {$112.3$} & {$100.0$} & {$105.7$} & {$110.0$} & {$113.2$} & {$100.0$} & {$105.9$} & {$108.9$} & {$110.8$} & {$100.0$} & {$105.0$} & {$109.0$} & {$111.3$} \\
                MeshA* & {$100.0$} & {$110.8$} & {$122.6$} & {$128.6$} & {$100.0$} & {$109.2$} & {$117.6$} & {$122.3$} & {$100.0$} & {$113.3$} & {$125.2$} & {$131.4$} & {$100.0$} & {$109.4$} & {$121.2$} & {$129.5$} \\
        \bottomrule

    \end{tabular}
    \caption{Median relative costs of the trajectories found as a percentage of the optimal cost.}
    \label{tab:full_costs}
\end{table*}

\subsection{Precomputed Transition Table}

With configurations numbered, we can now pre-compute the successor relationships. The mesh graph is regular (translationally invariant), meaning the successor's configuration $\Psi'$ and its displacement $(\delta_i, \delta_j)$ relative to the parent cell's position depend only on the parent's configuration $\Psi$, not its absolute coordinates $(i,j)$.

Therefore, during the numbering process in Algorithm~\ref{algorithm_dfs}, we can simultaneously build a transition table. For each configuration ID, this table stores a list of successor configuration IDs, each paired with its relative displacement and transition cost.
With this table, successor generation for an extended cell $(a, b, \text{ID}_{\Psi})$ becomes a constant-time lookup: retrieve the list of \texttt{(ID\_successor, cost, displacement)} and compute the new coordinates as $(a+\delta_i, b+\delta_j)$. This makes MeshA*'s successor generation as efficient as LBA*'s. This pre-computation is a one-time cost for a given control set, analogous to the process of generating the control set itself.

\subsection{Memory Consumption}

A potential concern is that MeshA* may consume significantly more memory than LBA*, as its search nodes (extended cells) correspond to every cell in a primitive's trace, whereas LBA*'s nodes (discrete states) only correspond to the start and end points.

This concern is addressed by a key implementation detail regarding path reconstruction. The reconstruction algorithm (Alg.~\ref{algorithm_reconstruction}) only requires the sequence of \textit{initial} extended cells from the final path. Non-initial cells are transient states within the execution of a single primitive. Their full history is not needed for the final solution, and once expanded, they do not need to be kept in memory (e.g., in the CLOSED list) as no path will ever need to be reconstructed back to them.

Therefore, only initial extended cells need to be fully stored. Since initial extended cells in MeshA* are the direct analogues of the discrete states stored by LBA*, this implementation detail ensures that the memory footprint of MeshA* is comparable to that of LBA*.

\section{Appendix C. Additional Experimental Results}

This appendix provides supplementary material to the experimental evaluation presented in the main paper, including complete solution cost data, an extended runtime evaluation on a more diverse set of maps, and visualizations of resulting trajectories.

\subsection{Complete Solution Cost Data}

Table~\ref{tab:full_costs} presents the complete data for the median solution costs across all tested maps, complementing the summary provided in Table~\ref{tab:algorithm_costs} of the main text. The cost is shown as a percentage of the optimal cost, where 100\% represents the optimal solution. As established theoretically, both LBA* and MeshA* consistently find optimal solutions ($w=1$), while their lazy and weighted variants produce solutions of varying quality.

\begin{table}[ht!]
    \centering
    \begin{tabular}{llcccc}
        \toprule
        \textbf{Map} & \textbf{Type} & \textbf{w=1} & \textbf{w=2} & \textbf{w=5} & \textbf{w=10} \\
        \midrule
        \texttt{NewYork\_1\_512} & city & 63.6 & 64.9 & 58.4 & 56.9 \\
        \texttt{Berlin\_1\_512} & city & 63.1 & 63.8 & 59.3 & 57.2 \\
        \texttt{Entanglement} & mixed & 65.3 & 72.2 & 66.9 & 65.8 \\
        \texttt{Boston\_1\_512} & city & 61.8 & 65.0 & 59.1 & 55.9 \\
        \texttt{Milan\_1\_512} & city & 61.1 & 60.9 & 56.6 & 55.3 \\
        \texttt{BigGameHunters} & mixed & 62.8 & 50.8 & 47.2 & 48.2 \\
        \texttt{London\_0\_512} & city & 61.8 & 58.5 & 53.5 & 53.3 \\
        \texttt{AR0015SR} & indoor & 62.6 & 63.4 & 56.5 & 56.6 \\
        \texttt{gardenofwar} & maze & 66.7 & 66.7 & 62.4 & 61.5 \\
        \texttt{BlastFurnace} & mixed & 61.7 & 50.3 & 49.4 & 49.7 \\
        \texttt{IceMountain} & mixed & 67.7 & 51.1 & 49.7 & 49.7 \\
        \texttt{Paris\_1\_512} & city & 64.1 & 66.4 & 58.2 & 57.6 \\
        \bottomrule
    \end{tabular}
    \caption{Median runtime of MeshA* as a percentage of LBA* runtime on additional maps. Lower values are better.}
    \label{tab:extended_runtime}
\end{table}

\subsection{Extended Runtime Evaluation on Diverse Maps}

To further validate the performance of MeshA* and demonstrate its robustness, we conducted an extended set of experiments on 12 additional maps from the MovingAI benchmark. These maps cover a wider range of environments, including city, maze, indoor, and mixed-geometry layouts. For each map, we used four heuristic weights ($w \in \{1, 2, 5, 10\}$) and ran 5,000 test instances per combination (total: $4~\text{weights} \times 5,000~\text{tests} \times 12~\text{maps} = 240,000$ instances).

Table~\ref{tab:extended_runtime} shows the median runtime of MeshA* as a percentage of the LBA* runtime. A value of 66.7\% corresponds to a 1.5x speed-up, while 50\% represents a 2x speed-up. The results consistently show a 1.5x to 2x speed-up for MeshA*, confirming the findings from the main paper across a broader and more diverse set of environments. 

The initial four maps were chosen for the main paper because they are representative of these different geometries (city, maze, mixed, etc.), and as shown here, the performance trends hold consistently within each geometry type.

\subsection{Example Trajectories}

Figure~\ref{fig:appendix_trajectories} provides a visual comparison of trajectories found with different heuristic weights, specifically chosen to illustrate a case where the suboptimality of MeshA*'s solution grows faster with the weight $w$ compared to LBA*. The optimal path ($w=1$), which is identical for both algorithms, is contrasted with a suboptimal path found by MeshA* with $w=2$. This highlights how the weighted heuristic encourages a more "greedy" search, often resulting in a path that reaches the goal faster but is less direct and has a higher cost. Both trajectories are smooth and kinodynamically feasible.

\section{Appendix D. Generalizations}

While this work focuses on the standard state lattice formulation with states $(x, y, \theta)$, the proposed MeshA* framework is generic and allows for several natural generalizations.

First, the spatial decomposition is not limited to 2D grids. The concept of "extended cells" can be directly lifted to 3D environments by replacing planar grid cells with volumetric voxels. The logic of mesh graph construction, primitive projection, and pruning rules remains conceptually identical.

Second, the framework can support arbitrary additional state parameters, provided they are discretized into a finite set of values. Indeed, our theoretical derivation relies solely on the fact that the heading $\theta$ takes values from a finite set (e.g., 16 in our experiments), ensuring a finite number of possible primitive configurations $\Psi$. This property holds for any finite tuple of non-spatial parameters. 

For instance, consider a state space augmented with curvature: $(x, y, \phi, \kappa)$, where $\phi \in \{\phi_1, \dots, \phi_n\}$ represents the discrete heading and $\kappa \in \{\kappa_1, \dots, \kappa_m\}$ represents discrete curvature. We can map every unique pair $(\phi_i, \kappa_j)$ to a single abstract variable $\Theta$ with $N = n \times m$ distinct values. By using $\Theta$ in place of $\theta$, the problem is reduced to our original formulation, making MeshA* directly applicable.

However, while theoretically sound, the practical efficiency of MeshA* in such high-dimensional spaces remains an open question. The size of the precomputed transition table and the number of extended cells grow with the complexity of the "abstract heading" $\Theta$, potentially impacting the scalability of the approach.

\begin{figure*}[ht!]
    \centering
    \includegraphics[width=1\textwidth]{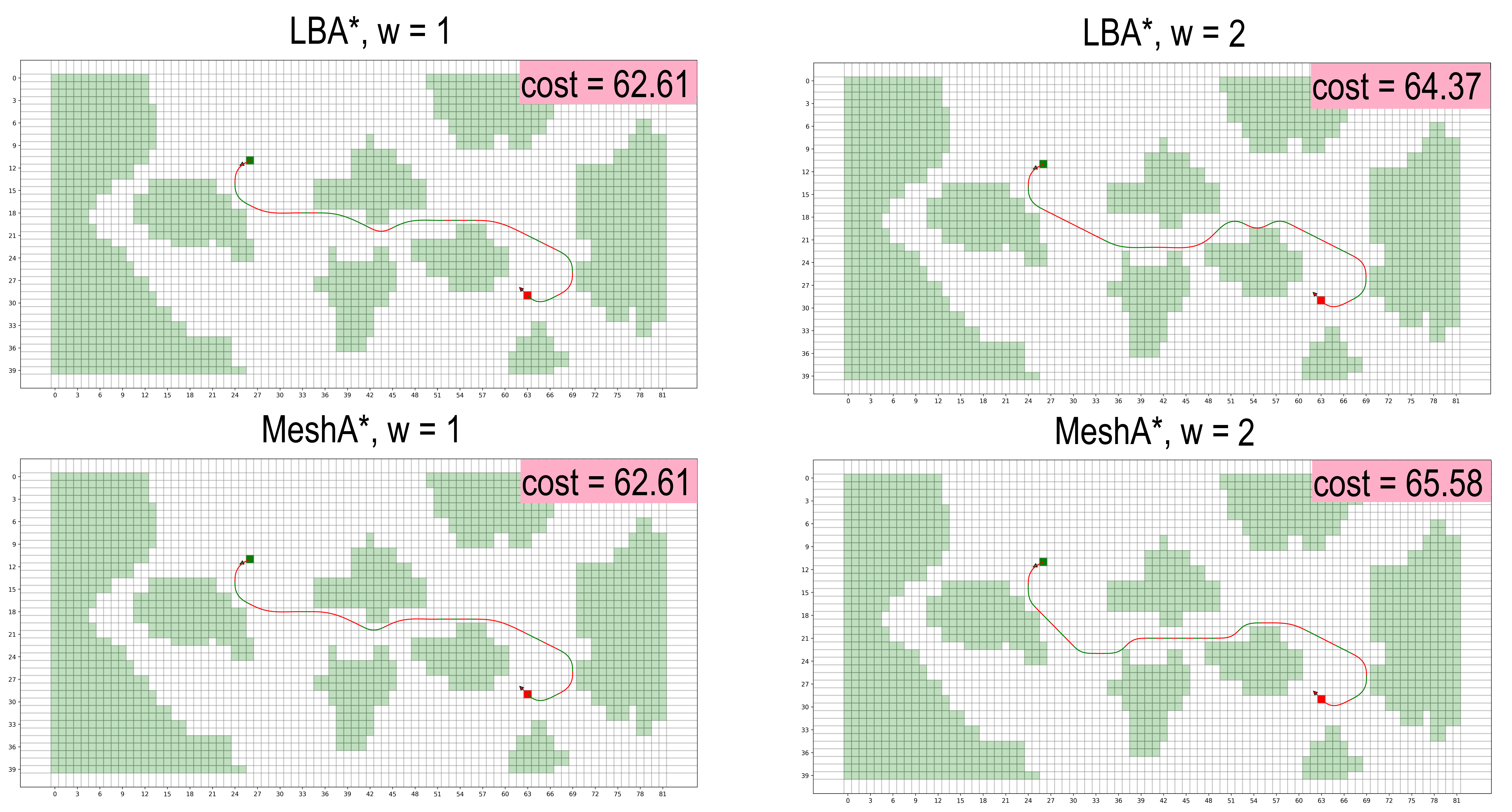}
    \captionsetup{width=0.7\textwidth}
    \caption{Visual comparison of an optimal path found with $w=1$ (left) and a suboptimal path found with $w=2$ (right) for the same start-goal pair.}
    \label{fig:appendix_trajectories}
\end{figure*}

\end{document}